\documentclass[11pt]{article}

\usepackage[preprint]{acl}

\usepackage{times}
\usepackage{latexsym}

\usepackage[T1]{fontenc}

\usepackage[utf8]{inputenc}

\usepackage{microtype}

\usepackage{inconsolata}

\usepackage{graphicx}

\usepackage{tikz}
\usepackage{algorithm}
\usepackage{algpseudocode}
\usepackage{booktabs}
\usepackage{colortbl}
\usepackage{xcolor}

\definecolor{barblue}{RGB}{70, 130, 180} 
\definecolor{bargray}{RGB}{220, 220, 220} 
\definecolor{ourbg}{RGB}{235, 245, 255}   
\usepackage{times}
\usepackage{latexsym}
\usepackage{enumitem}
\usepackage[most]{tcolorbox}

\usepackage{amsmath}
\usepackage{amssymb}
\usepackage{amsthm}

\newcommand{\cbar}[2]{%
    \begin{tikzpicture}[baseline={(0,0.08)}] 
        \fill[bargray, rounded corners=2pt] (0,0) rectangle (1.4, 0.35); 
        \fill[barblue, rounded corners=2pt] (0,0) rectangle (#1*1.4, 0.35); 
        \node[right, font=\scriptsize\sffamily] at (1.5, 0.18) {#2}; 
    \end{tikzpicture}%
}

\newcommand{\level}[1]{%
    \begin{tikzpicture}[baseline=-0.3em]
        \foreach \i in {1,...,5} {
            \fill[black!10] (\i*0.25, 0) circle (0.06); 
        }
        \foreach \i in {1,...,#1} {
            \fill[barblue] (\i*0.25, 0) circle (0.06); 
        }
    \end{tikzpicture}%
}

\theoremstyle{plain}
\newtheorem{theorem}{Theorem}[section]

\theoremstyle{definition}
\newtheorem{definition}[theorem]{Definition}
\newtheorem{assumption}[theorem]{Assumption}
\theoremstyle{remark}

\usepackage[utf8]{inputenc}
\usepackage{booktabs} 
\usepackage{multirow} 
\usepackage{xcolor}   
\usepackage{colortbl} 

\newcommand{\gain}[1]{\textsubscript{\textbf{\color{green!50!black}{+#1}}}}
\definecolor{rowblue}{RGB}{235, 245, 250}
\usepackage{bm}

\usepackage[most]{tcolorbox}
\usepackage{xcolor}
\definecolor{promptbg}{RGB}{248,248,248}
\definecolor{promptborder}{RGB}{190,190,190}
\newtcolorbox{promptbox}[2][]{
  enhanced,
  breakable,
  colback=promptbg,
  colframe=promptborder,
  boxrule=0.6pt,
  arc=2pt,
  left=8pt,
  right=8pt,
  top=6pt,
  bottom=6pt,
  fonttitle=\bfseries,
  title={#2},
  coltitle=black,
  #1
}

\title{ToolVerse: Unlocking Massive Environments and \\ Long-Horizon Tasks for Agentic Reinforcement Learning}

\author{
  \textbf{Shuaiyu Zhou\textsuperscript{1,2}\thanks{These authors contributed equally and should be considered co-first authors.}} 
  \quad
  \textbf{Fengpeng Yue\textsuperscript{1}\footnotemark[1]} 
  \quad
  \textbf{Zengjie Hu\textsuperscript{1,2}\footnotemark[1]}
  \quad
  \textbf{Yuanzhe Shen\textsuperscript{1,3}}
  \\[4pt]
  \textbf{Chenyang Zhang\textsuperscript{1,4}}
  \quad
  \textbf{Feng Hong\textsuperscript{1}}
  \quad
  \textbf{Cao Liu\textsuperscript{1}}
  \quad
  \textbf{Ke Zeng\textsuperscript{1}}
  \\[6pt]
  \textsuperscript{1}LongCat Interaction Team, Meituan, Beijing, China
  \\
  \textsuperscript{2}Peking University, Beijing, China
  \quad
  \textsuperscript{3}Fudan University, Shanghai, China
  \quad
  \textsuperscript{4}Wuhan University, Wuhan, China
  \\[6pt]
  \{monster290,huzengjie\}@stu.pku.edu.cn, yzshen25@m.fudan.edu.cn, chenyoung@whu.edu.cn
  \\
  \{yuefengpeng02,hongfeng03,liucao,zengke\}@meituan.com
}

\begin{document}
\maketitle
\begin{abstract}
While LLM agents demonstrate strong reasoning abilities in compact and well-defined scenarios, they struggle to maintain robustness and effectiveness when faced with large-scale, diverse, and dynamic real-world environments that demand seamless tool integration. To address this gap, we introduce \textbf{ToolVerse}, a comprehensive  framework that scales up agentic RL environments and enables agents to perform complex long-horizon reasoning in Tool-Integrated Reasoning (TIR) tasks. First, ToolVerse automatically builds the massive executable agent training environments from nearly 422 real-world MCP environments that contain about 4438 tools. Second, we propose a task design strategy based on a tool dependency graph, utilizing  Dynamic Unlocking Sampling Algorithm  to generate long-horizon tasks, and produce \textbf{GUST} (\textbf{G}raph \textbf{U}nlocking \textbf{S}ampling \textbf{T}asks) dataset. Third, to alleviate the credit assignment problem in long-horizon agentic RL, we propose a fine-grained Turn-Aware Relative Advantage algorithm. We conduct extensive Agentic RL training using ToolVerse and evaluate our framework on several agentic benchmarks. Experimental results demonstrate that our framework significantly strengthens LLMs' capabilities in long-horizon tool use, achieving a marked performance boost and showcasing robust reasoning within dynamic environments.
\end{abstract}

\section{Introduction}
Large Language Models (LLMs) have demonstrated considerable promise as autonomous agents, capable of interacting with tools and environments to accomplish complex tasks \cite{webdancer, ye2025feedbackdriventooluseimprovementslarge, apigen-mt}. When combined with Agentic Reinforcement Learning, LLMs can develop the ability to reason over long horizons and make sequential decisions \cite{rstar2agent, toolrl, toolstar, torl, skyrlagent}. By integrating tool usage with reasoning and adapting actions based on intermediate observations, LLMs can plan over extended time frames, positioning them as powerful candidates for autonomous agent development in dynamic, real-world environments.

\begin{table*}[t]
\setlength{\tabcolsep}{3pt}

    \begin{center}
        \begin{small}
        \caption{We posit the scalability of \textbf{Agentic RL} is primarily limited by three dimensions: \textbf{Scope} (environment diversity), \textbf{Tool Use Complexity} (reasoning depth), and \textbf{Credit Assignment} (training signal granularity). ToolVerse bridges the gap across these aspects.}
        \label{tab:visual_comparison}
        
        \begin{tabular}{l l l l}
        \toprule
        \textbf{Method} & \textbf{Tool Scope (Env Diversity)} & \textbf{Tool Use Complexity} & \textbf{Credit Assignment} \\
        \midrule
        ZeroTIR     & \cbar{0.2}{Single tool (Code)}   & \level{1} \footnotesize Math Reasoning & Outcome (Sparse) \\
        SimpleTIR   & \cbar{0.2}{Single tool (Code)}   & \level{1} \footnotesize Math Reasoning & Outcome (Sparse) \\
        AGENTFLOW   & \cbar{0.4}{Search + Code}        & \level{2} \footnotesize QA/Agentic/Math Reasoning & Outcome (Sparse) \\
        ToolRL      & \cbar{0.6}{Three Benchmarks' tools}         & \level{3} \footnotesize Tool use Reasoning & Outcome (Dense) \\
        SALT        & \cbar{0.6}{Three Benchmarks' tools}         & \level{3} \footnotesize Tool use Reasoning & Outcome (Sparse) \\
        FTRL        & \cbar{1.0}{Many General tools}         & \level{4} \footnotesize QA/Tool Use Reasoning & Outcome (Sparse) \\
        \midrule
        \rowcolor{ourbg}
        \textbf{ToolVerse (Ours)} & \cbar{1.0}{Many General tools} & \level{5} \textbf{Long-horizon Tool Reasoning } & \textbf{Turn-Aware (Dense)} \\
        \bottomrule
        \end{tabular}
        \end{small}
    \end{center}
\end{table*}

However, as shown in Figure \ref{tab:visual_comparison}, developing agentic systems presents three primary challenges. First, existing Agent RL environments are typically limited to interactions with a single tool or a small set of tools, such as search engines or code interpreters \cite{zerotir, li2025intheflowagenticoptimizationeffective, toolrl, simpletir}. While effective within narrow domains, these environments lack the complexity required for long-horizon, multi-turn tool integration. Second, designing tasks that involve multi-turn, long-horizon reasoning across multiple integrated tools is inherently difficult \cite{ye2025feedbackdriventooluseimprovementslarge}. These tasks demand that each action be informed by prior steps, requiring sophisticated planning and reasoning over extended sequences of interactions. Third, precise credit assignment is difficult in multi-turn agentic RL. Sparse terminal rewards fail to provide meaningful advantage estimates for individual actions within a long-horizon trajectory. This lack of granular feedback leads to high variance in policy gradients, resulting in poor sample efficiency or even training instability.

In this paper, we introduce \textbf{ToolVerse}, a comprehensive framework designed to address the limitations of current Agentic RL training environments and to support more complex agent training. \textbf{(1) Scaling Executable Agentic RL Environments}: To overcome the lack of diverse, general-purpose environments, we scale the training landscape by automating the generation of executable MCP tools. This expands the range of tools available for interaction, facilitating training in more realistic, complex scenarios. \textbf{(2) Graph-Guided Long-Horizon Task Composition}: To address the challenge of task complexity, we propose a task design strategy based on a tool dependency graph and a dynamic unlocking sampling algorithm, producing \textbf{GUST} dataset. This approach generates task subsets that require integrated reasoning across multiple steps, with outputs from earlier tasks feeding into subsequent ones, thereby enabling agents to develop sophisticated tool use capabilities over extended interaction sequences. \textbf{(3) Turn-Aware Relative Advantage Estimation}: To enhance credit assignment during trajectory evaluation, we introduce turn-aware reward and advantage estimation. This allows the model to learn from training with more precise feedback, enabling it to develop a deeper understanding from the task performance at each step.

Our contributions can be summarized as follows:

\begin{itemize}[nosep, leftmargin=1.5em, itemsep=0pt]
    \item We develop an automated pipeline for scaling executable MCP environments, expanding agentic RL from narrow domains to diverse, real-world tool-integrated reasoning scenarios.
    
    \item We propose a TDG-based task generation paradigm with the \textbf{Dynamic Unlocking Sampling Algorithm}, producing coherent long-horizon and multi-turn tasks and yielding the \textbf{GUST} dataset for agentic RL training.
    
    \item We introduce Turn-Aware Relative Advantage estimation for fine-grained credit assignment under sparse rewards, improving training stability and tool-integrated reasoning performance across agentic benchmarks.
\end{itemize}

\begin{figure*}[!t]  
    \centering  
    \includegraphics[width=1\textwidth]{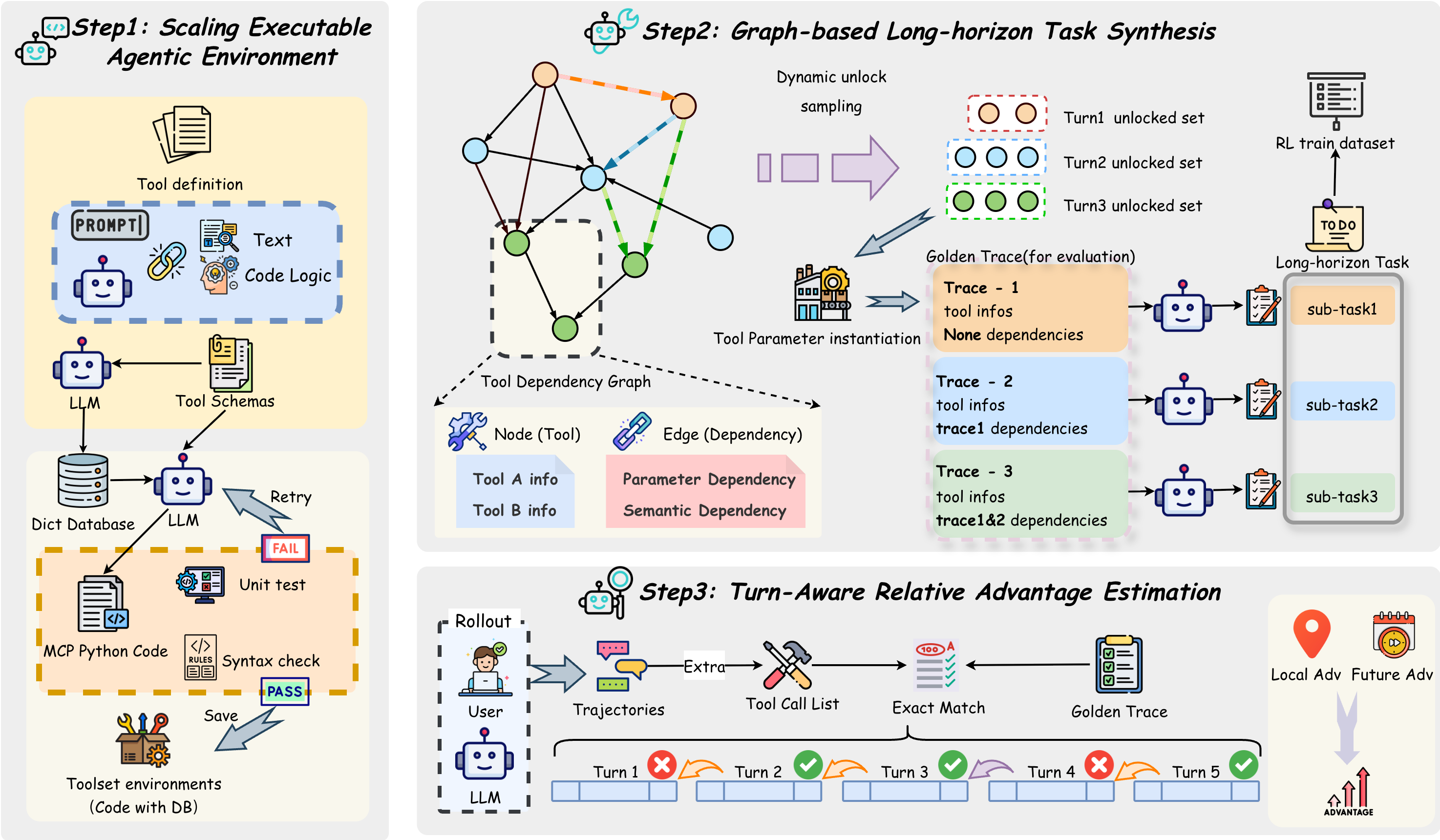}
    \caption{Overview of the \textbf{ToolVerse} framework. \textit{\textbf{Step1: Scaling Executable Agentic Environment}}: We automate the conversion of raw tool definitions into executable MCP environments. \textit{\textbf{Step2: Graph-based Long-horizon Task Synthesis}}: A tool dependency graph is constructed to model logical data flows. We employ a dynamic unlocking sampling algorithm to synthesize long-horizon tasks. \textit{\textbf{Step3: Turn-Aware Relative Advantage Estimation}}: We propose turn-aware advantage algorithms based on the characteristics of the task.}
    \label{fig:toolverse_framework} 
\end{figure*}

\section{Related Work}
\subsection{Tool-integrated reasoning in LLM Agents} 
Recent advancements in Agentic RL have highlighted the significance of integrating tool usage within large language models (LLMs) to enhance their ability to perform complex, multi-step reasoning. 
For instance, TORL \cite{torl} and rStar2-Agent \cite{rstar2agent} leverage reinforcement learning to train LLMs in utilizing code interpreters, treating code execution as a reliable feedback mechanism. 
Similarly, ZeroTIR \cite{zerotir} investigates scaling laws for agentic RL, revealing that tool-use abilities can emerge through outcome-based rewards, with performance linked to the computational resources available. 
To maintain stability in multi-turn training, SimpleTIR \cite{simpletir} filters out “void” trajectories—sequences that lack meaningful tool use or outputs. 
Additionally, multi-tool collaboration has been explored by works like ToolStar\cite{toolstar} , which enable agents to coordinate between search engines and code interpreters. ToolRL \cite{toolrl} focuses on reward design to optimize selection strategies across multiple tools.
However, these efforts are predominantly constrained to narrow toolsets, such as code interpreters and search engines, and typically involve fixed interaction patterns. 
This limitation hinders the development of generalized capabilities for interacting with diverse, structured APIs in real-world environments.


\subsection{Training environment for Agentic RL}

The progress of Agentic RL is fundamentally dependent on the fidelity and scalability of its training environments.
Several studies \cite{li2025simulating, castellani2025synthtools, xiang2023language} utilize LLMs’ reasoning abilities and background knowledge to emulate environments.
While this approach eliminates the necessity of constructing actual environments, it often suffers from issues such as hallucinations, inconsistency, limited transparency, and inadequate management of persistent states.
Recent methods such as TaskCraft \cite{taskcraft} have automated the generation of scalable tasks to address data scarcity, while offline-database methods \cite{fang2025generalagenticintelligenceenvironment, ye2025feedbackdriventooluseimprovementslarge} utilize static captures of real-world data to preserve simulation fidelity.


\section{Method}
\vspace{-0.2em}

In this section, we delineate the \textbf{ToolVerse} framework. We first introduce the automated synthesis pipeline for scaling executable environments (Sec.~\ref{sec:scaling}). We then formalize the graph-guided generation mechanism for long-horizon task curricula (Sec.~\ref{sec:graph}). Finally, we use a Turn-Aware Relative Advantage Estimation algorithm (Sec.~\ref{sec:reward}) tailored for long-horizon reasoning, addressing the credit assignment limitations in standard group-based reinforcement learning.

\subsection{Scaling Executable Agentic Environments}
\label{sec:scaling}
To construct agentic environments across diverse domains, we propose an automated closed-loop synthesis pipeline. Starting with a repository of raw toolsets $\mathcal{S}^{\text{raw}} = \{ S_1, \dots, S_N \}$, where each $S_i$ targets a specific functional domain, our objective is to transform these static definitions into robust executable environments.

First, we perform \textit{Schema Refactoring} to refine each raw schema $S_i$ into a structured form $\hat{S}_i$. This step aligns function signatures with intended functionality and eliminates textual noise, simplifying logic to facilitate LLM-based Python code generation. Next, we construct a domain-specific dictionary database $\mathcal{D}_i$ for each tool. This database models essential state variables (e.g., inventory, profiles) to enable stateful environmental interactions. Given $\hat{S}_i$ and $\mathcal{D}_i$, we generate executable functions $\mathcal{F}_i = \{f_1, \dots, f_m\}$ using the MCP tools library. These functions implement the logic to query or update $\mathcal{D}_i$, establishing a dynamic, operable environment. Finally, to ensure robustness, we automatically generate unit tests $\mathcal{T}_i$. We retain only toolsets that pass all syntax and unit tests ($\mathbb{V}(\mathcal{F}_i, \mathcal{T}_i) = \text{True}$). These environments are integrated into the Verl framework \cite{sheng2024hybridflow} to provide a diverse, high-fidelity action space for RL training. More construction details are provided in Appendix~\ref{app:environment_construction}

\subsection{Graph-based Long-horizon Task Synthesis}
\label{sec:graph}
To facilitate long-horizon reasoning, we move beyond random walk-based task generation and propose a structured, graph-theoretic approach to task synthesis. Before applying DUS, we sanitize the LLM-induced TDG by removing cyclic and non-executable dependencies; details are provided in Appendix~\ref{app:tdg_sanitization}. The detailed pseudocode is in Algorithm~\ref{alg:dynamic_unlocking}

\begin{figure*}[!t]  
    \centering  
    \includegraphics[width=1\textwidth]{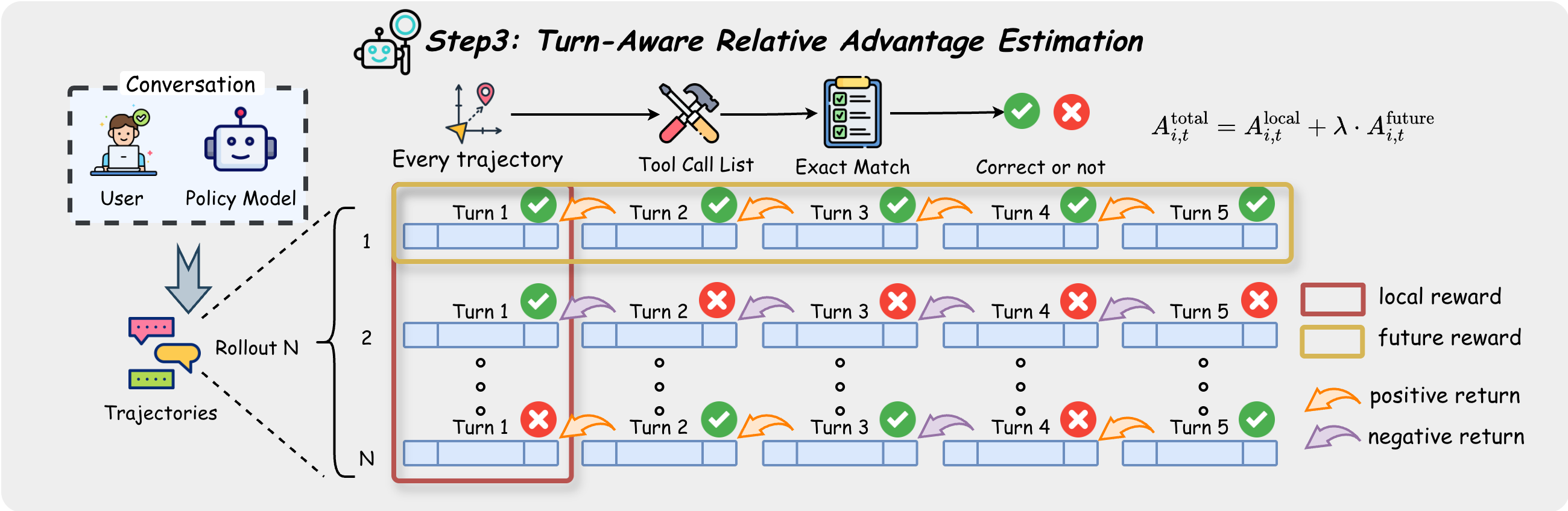}
    \caption{Overview of the \textbf{Turn-Aware Relative Advantage Estimation}. For each turn in the multi-turn multi-step tool usage trajectory, we perform rule-based validation to ensure its validity. During the relative advantage computation, we calculate \textbf{the normalized advantage for each turn} based on the group distribution at that turn, and then propagate this advantage uniformly to all tokens within the corresponding turn.}
    \label{fig:toolverse_adv} 
\end{figure*}

\subsubsection{Tool Dependency Graph Construction}
We define a tool dependency graph (TDG) for each scenario $\mathcal{G} = (\mathcal{V}, \mathcal{E})$, where nodes represent tools $\mathcal{V}$ and edges $\mathcal{E}$ capture the dependencies between them. The dependency structure is inferred with the help of an LLM based on two key principles: (1) an edge is established from tool $T_A$ to tool $T_B$ if the output of $T_A$ is required as an input for $T_B$; (2) an edge is drawn from $T_A$ to $B$ if $T_B$ can only be invoked after $T_A$ in the logical sequence of the scenario. The resulting graph, $\mathcal{G}$, serves as the foundational structure for reasoning, where paths represent valid sequences of tool invocations with contextual dependencies.

\subsubsection{Dynamic Unlocking Sampling}
\textbf{Causal Grounding via Topological Constraints.} To generate multi-turn trajectories with logical consistency, we introduce the Dynamic Unlocking Sampling (DUS) algorithm. The algorithm maintains a ready queue $\mathcal{Q} \subseteq \mathcal{V}$, consisting of tools with zero in-degree, ensuring that tasks with high dependency are “locked” until their prerequisites are completed. This topological constraint prevents causal errors in reasoning chains.

\textbf{Iterative Unlocking Mechanism.} The process starts with a pool of root nodes $\mathcal{Q} = \{v \in \mathcal{V} \mid d_{\text{in}}(v) = 0\}$, typically comprising basic operations. At each step $t$, we sample a subset $S_t \subseteq \mathcal{Q}$ to form a trajectory stage. The batch size is determined by $k = \min(|\mathcal{Q}|, N)$. After executing $S_t$, the algorithm updates the in-degree for all successor nodes: $d_{\text{in}}(v) \leftarrow d_{\text{in}}(v) - 1$. A node $v$ is then added to $\mathcal{Q}$ when $d_{\text{in}}(v) = 0$, signaling that all dependencies are satisfied.

\textbf{Emergence of Long-Horizon Logic.} This topological progression induces a curriculum of increasing task complexity within the final trajectory $T = [S_1, S_2, \dots, S_m]$. Simple queries necessarily precede more complex operations, leading to the natural emergence of complex reasoning at later stages. The resulting trajectories exhibit a coherent cognitive flow, providing high-fidelity training signals for long-horizon LLM agents.

\subsubsection{Inverse Context Reconstruction}
After DUS samples a dependency-compatible tool skeleton, we ground it in the mock database to obtain an executable \textbf{Golden Trace}. Specifically, arguments are instantiated in topological order: dependency-based arguments are filled from previous tool outputs according to TDG edges, while context-specific arguments are retrieved from the current database state.

For each environment, we sample multiple such traces, ensuring that the resulting task set exercises as many tools in the environment as possible. Given each Golden Trace, we then prompt an LLM to convert each Golden Trace into a user-facing task. The involved dependency edges are provided during generation so that the task preserves the intended parameter and semantic dependencies. Finally, we replay the generated task set with LangGraph to verify that the Golden Trace remains executable under stateful database updates, and further apply teacher-agent Pass@8 filtering. More details are provided in Appendix~\ref{app:inverse_context_reconstruction}.

\subsection{Turn-Aware Relative Advantage Estimation}
\label{sec:reward}

Standard Group Relative Policy Optimization (GRPO)~\cite{grpo} normalizes rewards across the entire trajectory, assigning a single scalar advantage to all tokens in a response. However, in long-horizon tool use, this coarse-grained feedback fails to distinguish between correct intermediate steps and fatal errors in later stages. Leveraging the structured nature of our tasks—where the ``Golden Trace'' provides ground truth for each turn—we propose a fine-grained Turn-Aware Relative Advantage mechanism.

We decompose the turn-level advantage into two components: \textit{Local} (immediate correctness) and \textit{Future} (downstream impact). For a group of $K$ outputs in a $T$-turn task, we define the total advantage $A_{i,t}^{\text{total}}$ at turn $t$ for the $i$-th agent as follows.




\paragraph{Binary Reward.}
Given the golden trace, we compute a binary reward for each turn based on tool-call coverage. Let $G_t$ be the set of golden tool JSON objects required at turn $t$, and $A_{i,t}$ be the tool-call JSON sequence generated by the $i$-th rollout. We set $r_{i,t}=1$ if $A_{i,t}$ covers $G_t$, and $r_{i,t}=0$ otherwise:
\begin{equation}
r_{i,t} =
\begin{cases}
1.0, & \text{if } G_t \subseteq_{\mathrm{dict}} A_{i,t}, \\
0.0, & \text{otherwise}.
\end{cases}
\end{equation}
Here, $\subseteq_{\mathrm{dict}}$ denotes coverage under dictionary-level matching of tool JSON objects, including tool names and arguments. Thus, the reward is not string-level exact matching, but any missing tool call or dictionary-level mismatch yields zero reward for the turn. Dependency-compatible orders within a turn are allowed, since the TDG defines causal constraints rather than a fixed total order.

\paragraph{Local Advantage ($A^{\text{local}}$).}
The local advantage measures the agent's immediate performance relative to the group at turn $t$. We compute the mean $\mu_t^{\text{local}}$ and standard deviation $\sigma_t^{\text{local}}$ of the rewards $\{r_{1,t}, \dots, r_{K,t}\}$ within the group. The standardized local advantage is:
\begin{equation}
A_{i,t}^{\text{local}} = \frac{r_{i,t} - \mu_t^{\text{local}}}{\sigma_t^{\text{local}} + \epsilon}.
\end{equation}
If all rollouts receive the same reward at turn $t$, the relative advantage is set to zero because no within-group preference signal is available; $\epsilon$ is used only for numerical stability. This term incentivizes agents to outperform others at the same turn. Zero normalized advantage is in Appendix~\ref{app:zero_variance_normalization}.

\paragraph{Gated Future Advantage ($A^{\text{future}}$).}
To account for long-horizon reasoning, we assess the future value flow. However, in rigid tool environments, a mistake in the current step often invalidates subsequent tasks. To mitigate this, we introduce a Consistency Gate $\delta_{i,t}$ (typically $\delta_{i,t} = r_{i,t}$), ensuring that future rewards are only credited if the current step is valid. The future value $V_{i,t}$ is defined as the discounted sum of future rewards:
\begin{equation}
V_{i,t} = \delta_{i,t} \cdot \sum_{k=0}^{T-t} \gamma^k r_{i, t+k+1}
\end{equation}
Similar to the local term, we normalize $V_{i,t}$ against the group statistics ($\mu_t^{\text{future}}, \sigma_t^{\text{future}}$) to obtain the relative future advantage $A_{i,t}^{\text{future}}$:
\begin{equation}
A_{i,t}^{\text{future}} = \frac{V_{i,t} - \mu_t^{\text{future}}}{\sigma_t^{\text{future}} + \epsilon}
\end{equation}
This term encourages the agent to select actions that satisfy the current constraint while enabling future success.

\paragraph{Total Advantage Fusion.}
The total advantage used for policy updates is a weighted fusion of the local and future components:
\begin{equation}
A_{i,t}^{\text{total}} = A_{i,t}^{\text{local}} + \lambda \cdot A_{i,t}^{\text{future}}
\end{equation}
where $\lambda \in [0,1)$ is a mixing coefficient, and we use $\lambda=0.5$ by default. By optimizing this decomposed advantage, our method mitigates the credit assignment problem in multi-turn tool interactions, offering denser and more precise supervision than trajectory-level baselines. We also demonstrated the effectiveness of the method in terms of formulas in Appendix \ref{app:theory}.


In summary, we decompose each trajectory into turns, validate each turn with rule-based checks, normalize its outcome against the same-turn group distribution, and assign the resulting advantage to all tokens within that turn.

\section{GUST Dataset}
In this section, we detail the construction of the executable  environments and the automated pipeline employed to synthesize, filter, and curate the \textbf{GUST} dataset.

\subsection{Environments Analysis}
To build a diverse and realistic foundation for agent training, we sourced a vast collection of JSON tool definitions from both open-source and proprietary repositories, covering a wide range of real-world scenarios. As detailed in Section \ref{sec:scaling}, we developed an automated pipeline to convert these static definitions into executable MCP tools. Only toolsets that successfully formed a closed loop—passing all syntax validations and unit tests—were retained. This process resulted in over 400 executable tool environments. To ensure the environments possess appropriate complexity for reasoning training, we strictly filtered for toolsets containing between 5 and 20 tools. Figure \ref{fig:tool_metrics} illustrates the semantic breadth of the environment. We will refactor any toolkit environments that fail to generate tasks successfully in the future.

\begin{figure}[h]
    \centering
    \includegraphics[width=1\linewidth]{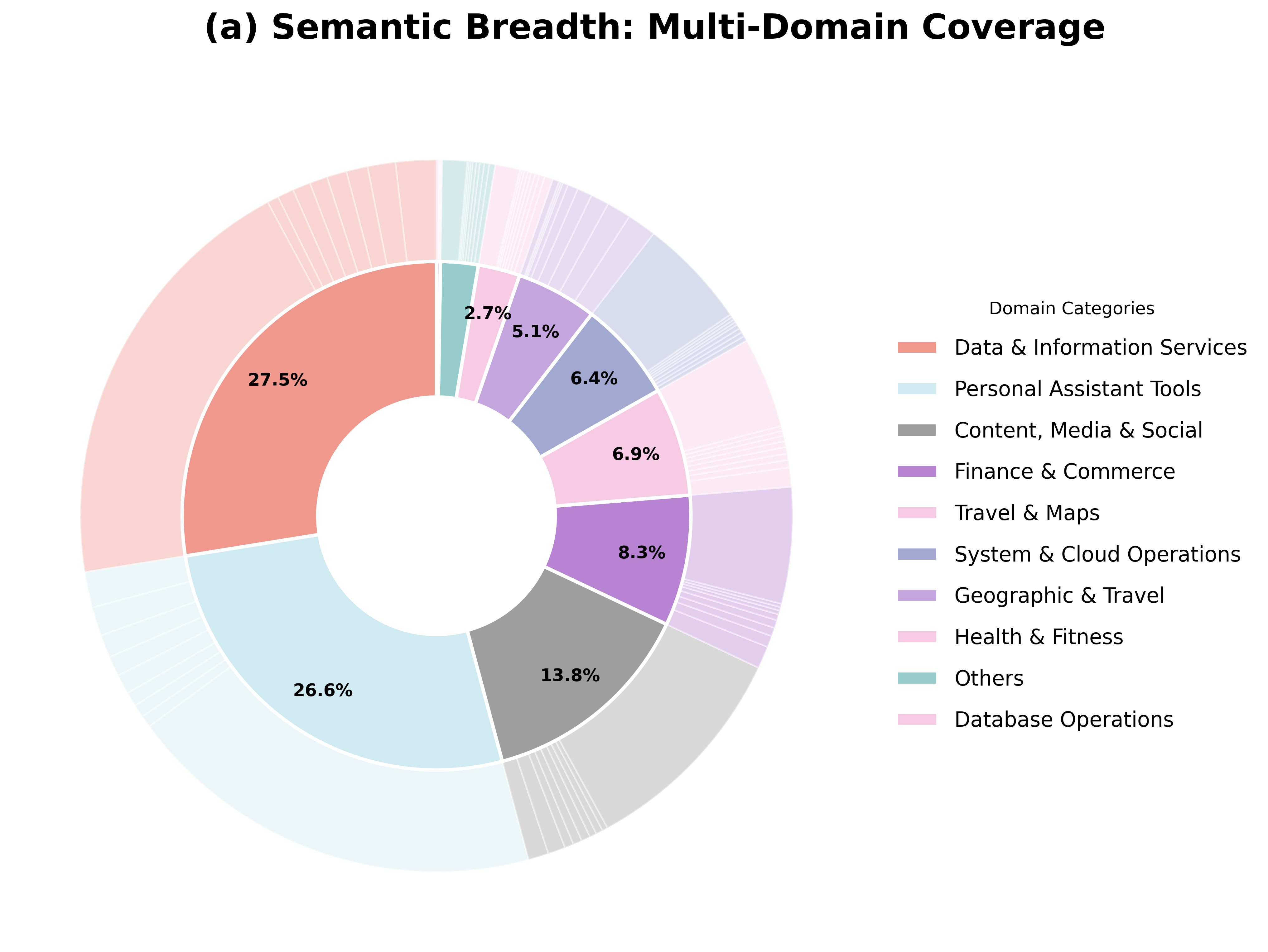}
    \caption{Semantic Breadth: The distribution of tools spans eight macro-domains and numerous specific entities, illustrating the environment's coverage of real-world scenarios.}
    \label{fig:tool_metrics}
    \vskip -0.2in
\end{figure}

\begin{figure}[h]
    \centering
    \includegraphics[width=1\linewidth]{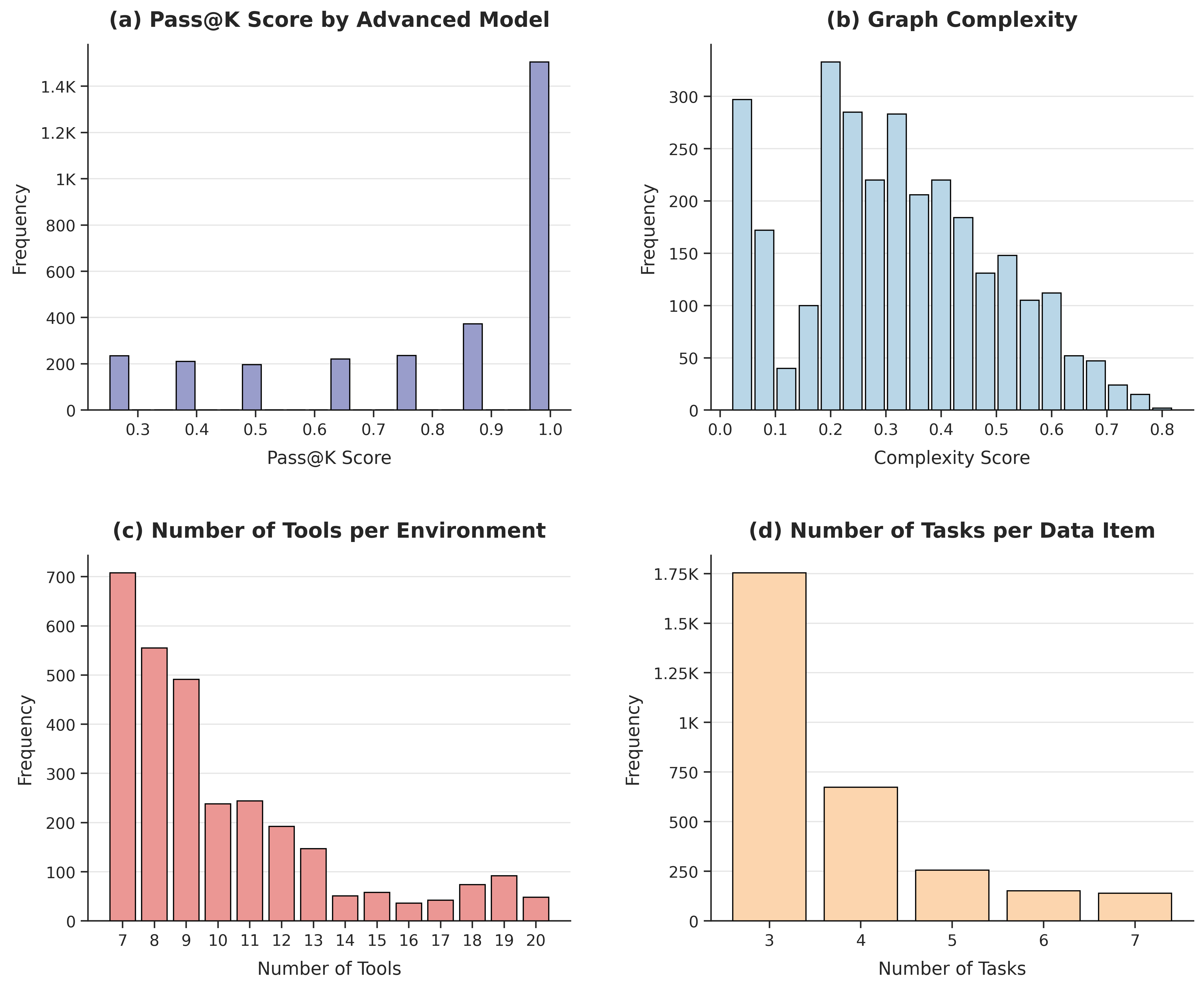}
    \caption{Statistical distribution of the GUST dataset across four metrics: (a) \textbf{Pass@K Score by Advanced Model}, showing a high concentration of successful task completions; (b) \textbf{Graph Complexity}, indicating a diverse range of logic flow difficulty; (c) \textbf{Number of Tools per Environment}, with most environments containing between 7 and 13 tools; and (d) \textbf{Number of Tasks per Data Item}, revealing that the data consist of 3 to 7 tasks.}
    \label{fig:data_metrics}
    \vskip -0.2in
\end{figure}

\subsection{Dataset Analysis} 
Based on the constructed toolsets, we synthesized high-quality training tasks and golden traces using the methods described in Section \ref{sec:graph} and \ref{sec:reward}. For each toolset, we generated three distinct dependency graphs to cover different logical flows. From each graph, we sampled five tasks using our dynamic unlocking algorithm.  To ensure the validity of these synthesized tasks, we implemented a rigorous verification pipeline using \textbf{LangGraph} \cite{LangGraph} as the agent execution framework. We applied a Pass@8 filtering strategy: for each candidate task, a teacher agent (e.g., GPT-4.1 \cite{GPT-4.1}, Deepseek-V3.2 \cite{deepseekai2025deepseekv32pushingfrontieropen}) attempts to solve it up to 8 times (we chose the Qwen3-32B \cite{yang2025qwen3technicalreport} model, which has performance and data distribution similar to the policy model). We calculate the reward for each sampled trajectory using the criteria established in Section \ref{sec:reward}. A task is deemed valid and retained only if at least one trajectory achieves a trace\_score is one; otherwise, the task is discarded. Finally, to maintain dataset diversity and prevent overfitting to specific scenarios, we retained a maximum of ten distinct tasks per toolset environment. Detailed processing results are shown in Figure \ref{fig:data_metrics}. Table \ref{tab:dataset_stats} summarizes the statistics of the final GUST dataset and toolset. Detailed case studies are in Appendix~\ref{fig:case_study}.

\begin{table*}[t]
\caption{Main results on agentic tool-using benchmarks, demonstrating that ToolVerse consistently achieves significant performance gains across all model scales and evaluation environments, with the TARA algorithm achieving the best overall results.}
\label{tab:main_results}
\centering
\small
\setlength{\tabcolsep}{1pt} 
\begin{tabular}{l cccc c cccc ccc}
\toprule
\multirow{2}{*}{\textbf{Model}} & \multicolumn{5}{c}{\textbf{BFCL Multi-Turn}} & \multicolumn{4}{c}{\textbf{$\bm{\tau^2}$-Bench}}  & \multicolumn{3}{c}{\textbf{ACEBench-Agent}} \\
\cmidrule(r){2-6} \cmidrule(lr){7-10} \cmidrule(l){11-13}
& \textbf{Base} & \textbf{\begin{tabular}[c]{@{}c@{}}Miss-\\ Func\end{tabular}} & \textbf{\begin{tabular}[c]{@{}c@{}}Miss-\\ Param\end{tabular}} & \textbf{\begin{tabular}[c]{@{}c@{}}Long-\\ Context\end{tabular}} & \textbf{Overall} & \textbf{Airline} & \textbf{Retail} & \textbf{Telecom}& \textbf{Overall} & \textbf{\begin{tabular}[c]{@{}c@{}}Multi-\\ Step\end{tabular}} & \textbf{\begin{tabular}[c]{@{}c@{}}Multi-\\ Turn\end{tabular}} & \textbf{Overall} \\
\midrule
\textit{\textbf{Advanced Models}} & & & & & & & & & & & & \\
DeepSeek-V3.2 & 41.50 & 39.50 & 33.50 & 35.00 & 37.39 
              & 63.8 & 81.1 & 96.2 & 80.37 
              & 92.50 & 68.75 & 80.62 \\
GPT-4.1       & 47.50 & 32.50 & 32.50 & 43.00 & 38.88 
              & 56.0 & 74.0 & 34.0 & 54.67 
              & 95.00 & 70.83 & 82.92 \\
Kimi-K2-Instruct  & 62.00 & 41.00 & 44.50 & 55.00 & 50.63 
              & 56.5 & 70.6 & 65.8 & 64.30
              & 85.00 & 73.33 & 79.17 \\
\midrule
Qwen3-4B (Thinking) 
        & 32.00 & 20.00 & 24.00 & 25.50 & 25.38 
        & 19.50 & 30.48 & 13.20 & 21.06 
        & 35.91 & 46.66 & 41.28 \\
\rowcolor{rowblue} + ToolVerse (w/ GRPO) 
        & 37.00 & 30.00 & 25.00 & 22.00 & \textbf{28.50}\gain{3.12} 
        & 22.00 & 39.50 & 15.80 & \textbf{25.77}\gain{4.71} 
        & 40.00 & 56.67 & \textbf{48.34}\gain{7.06} \\
\rowcolor{rowblue} + ToolVerse (w/ TARA) 
        & 36.00 & 31.50 & 22.00 & 23.50 & \textbf{28.25}\gain{2.87} 
        & 20.00 & 38.60 & 21.90 & \textbf{26.83}\gain{5.77} 
        & 50.00 & 60.00 & \textbf{55.00}\gain{13.72} \\
\midrule
Qwen3-8B (Thinking) 
        & 32.00 & 33.50 & 22.00 & 28.00 & 28.88
        & 22.00 & 43.20 & 18.40 & 27.87 
        & 53.44 & 39.58 & 46.51 \\
\rowcolor{rowblue} + ToolVerse (w/ GRPO) 
        & 41.50 & 38.00 & 31.00 & 30.50 & \textbf{35.25}\gain{6.37} 
        & 28.00 & 43.00 & 19.30 & \textbf{30.10}\gain{2.23} 
        & 60.00 & 53.33 & \textbf{56.66}\gain{10.15} \\
\rowcolor{rowblue} + ToolVerse (w/ TARA) 
        & 51.50 & 38.50 & 28.50 & 31.50 & \textbf{37.50}\gain{8.62} 
        & 26.00 & 50.00 & 21.10 & \textbf{32.37}\gain{4.50} 
        & 60.00 & 63.33 & \textbf{61.66}\gain{15.15} \\
\midrule
Qwen2.5-14B-Instruct
        & 22.50 & 17.50 & 16.00 & 15.50 & 17.88 
        & 17.00 & 37.73 & 18.42 & 24.38 
        & 40.00 & 55.56 & 47.78 \\
\rowcolor{rowblue} + ToolVerse (w/ GRPO) 
        & 34.50 & 20.00 & 18.50 & 17.00 & \textbf{22.50}\gain{4.62} 
        & 17.50 & 47.60 & 25.40 & \textbf{30.17}\gain{5.79} 
        & 45.00 & 56.67 & \textbf{50.84}\gain{3.06} \\
\rowcolor{rowblue} + ToolVerse (w/ TARA) 
        & 40.50 & 16.50 & 18.00 & 21.50 & \textbf{24.12}\gain{6.24} 
        & 20.00  & 45.60 & 31.60 & \textbf{32.40}\gain{8.02} 
        & 60.00 & 63.33 & \textbf{61.66}\gain{13.88} \\
\bottomrule
\end{tabular}
\end{table*}

\section{Experiments}
\label{sec:results}
\subsection{Experiment Setup}
\textbf{Training.} We perform RL on the Qwen2.5-14B-Instruct \cite{qwen2025qwen25technicalreport}, Qwen3-4B, and Qwen3-8B models \cite{yang2025qwen3technicalreport} using Group Relative Policy Optimization (GRPO) \cite{grpo}. A high-fidelity user simulator, DeepSeek-V3.2 \cite{deepseekai2025deepseekv32pushingfrontieropen}, is deployed to interact dynamically with the agent. We train using the custom-built GUST dataset and environment, where per environment acts as a MCP toolset. Modifications to the Verl framework \cite{sheng2024hybridflow} are made to seamlessly integrate our MCP tool environment during training. Our experiments are conducted on a cluster of 32 NVIDIA A100 GPUs. The training configuration employs a global batch size of 128. Detailed hyperparameters are documented in Appendix \ref{details_experiment}.


\textbf{Evaluation.} We evaluate the effectiveness of \textsc{ToolVerse} in enhancing long-horizon reasoning and tool-use capabilities across three widely-adopted multi-turn benchmarks: (1)BFCL-v3 Multi-Turn~\cite{bfcl}, which involves Python-based API interactions across four specialized categories, including \textit{Base}, \textit{Miss Param}, \textit{Miss Func}, and \textit{Long Context}; (2)$\tau^2$-Bench~\cite{tau2-bench}, which assesses complex user-agent interactions in real-world domains such as Airline, Retail, and Telecom; and (3)ACEBench-Agent~\cite{ACEBench}, which focuses on multi-step and multi-turn reasoning in dynamic environments. For $\tau^2$-Bench and ACEBench-Agent, which require a conversational user, we employ \texttt{DeepSeek-V3.2}\cite{deepseekai2025deepseekv32pushingfrontieropen} as the user simulator. Notably, since the original ACEBench uses a \texttt{[func\_name(param)]} prompt format by default, we modify the official implementation to support the LLMs’ native function-calling interface to ensure consistency. To ensure the reliability and stability of our experimental results, we conduct four independent runs for each evaluation and report the average@4 scores.

\begin{table}[t]
\centering
\small
\setlength{\tabcolsep}{3pt}
\begin{tabular}{lcc}
\toprule
\textbf{Method} & \textbf{BFCL-v3 Acc.} & \textbf{$\bm{\tau^2}$-Bench Avg.} \\
\midrule
Qwen2.5-7B-Instruct & 12.88\% & 16.00\% \\
+ToolRL             & 15.25\% & 16.37\% \\
+agentflow          & 11.25\% & 17.03\% \\
+SimpleTIR          & 11.75\% & 13.53\% \\
\textbf{+TARA (Ours)} & \textbf{20.00\%} & \textbf{29.87\%} \\
\bottomrule
\end{tabular}
\caption{Comparison with publicly available and reproducible baselines on BFCL Multi-turn and $\bm{\tau^2}$-Bench.}
\label{tab:public_baselines}
\end{table}

\subsection{Main Results}

\textbf{ToolVerse consistently delivers universal performance gains across diverse benchmarks and model architectures.} Experimental results in Table \ref{tab:main_results} demonstrate that ToolVerse consistently yields significant improvements across all three benchmarks and all evaluated model scales. Notably, this framework proves highly effective for both "Thinking" models and "Non-thinking" models; for instance, Qwen3-8B (Thinking) achieves a remarkable gain of +15.15 on ACEBench-Agent, while the non-thinking Qwen2.5-14B-Instruct also sees a substantial +13.88 increase on the same benchmark. This universal success across various testing dimensions—ranging from missing parameter detection in BFCL to complex domain-specific reasoning in $\tau^2$-Bench—validates that ToolVerse provides a robust and model-agnostic training paradigm for enhancing agentic tool-use capabilities.


\textbf{Our Turn-Aware Relative Advantage algorithm provides substantial gains over the naive GRPO baseline.} To isolate the impact of our proposed reward decomposition, we conduct an ablation study by comparing ToolVerse trained with naive GRPO (w/ GRPO) against our full TARA algorithm (w/ TARA) in Table \ref{tab:main_results}. The results consistently show that TARA yields superior performance across all benchmarks. For instance, on ACEBench-Agent, while naive GRPO improves the Qwen3-8B model to 56.66\%, TARA further elevates it to 61.66\%, representing a significant additional boost. This performance gap is even more pronounced in the multi-turn and multi-step subtasks, where the credit assignment problem is most acute. By replacing the coarse trajectory-level advantage in GRPO with fine-grained, turn-level reinforcement signals, TARA effectively identifies and reinforces correct intermediate tool-calling logic, leading to a more robust and precise policy in complex, long-horizon environments.

We evaluate against the publicly available baselines we could identify and evaluate in a reproducible manner, including ToolRL, agentflow, and SimpleTIR. As shown in Table~\ref{tab:public_baselines}, Qwen2.5-7B-Instruct-TARA consistently outperforms these public baselines. For \textbf{SALT} and \textbf{FTRL}, a faithful head-to-head comparison is currently not feasible because the key experimental artifacts are not publicly available. We will clarify this limitation in the revision.

\begin{table}[h]
\centering
\small
\setlength{\tabcolsep}{4pt}
\begin{tabular}{lcc}
\toprule
\textbf{Method} & \textbf{BFCL-v3} & \textbf{$\bm{\tau^2}$-Bench} \\
\midrule
Qwen3-8B & 28.88\% & 27.87\% \\
Qwen3-8B + GRPO & 35.25\% & 30.10\% \\
Turn-local only & 33.75\% & 27.33\% \\
Turn-local + future w/o gate & 35.00\% & 28.17\% \\
Full TARA & \textbf{37.50\%} & \textbf{32.37\%} \\
\bottomrule
\end{tabular}
\caption{Ablation of TARA on BFCL-v3 and $\bm{\tau^2}$-Bench.}
\label{tab:tara_component_ablation}
\end{table}

\subsection{Ablation Study}
\label{sec:ablation}

We conduct ablation studies on Qwen3-8B to evaluate the contribution of TARA components, the sensitivity to $\lambda$ and $\gamma$, and the effect of environment scaling, with detailed results reported in Appendix~\ref{app:other}.

\paragraph{Component ablation.}
Table~\ref{tab:tara_component_ablation} shows that full TARA achieves the best performance on both BFCL-v3 and $\tau^2$-Bench. The turn-local variant improves over the base model but remains weaker than GRPO, indicating that local credit alone is insufficient for long-horizon tool-use optimization. Adding future credit without the consistency gate also underperforms full TARA, suggesting that the gate is important for filtering noisy future signals.

\paragraph{Sensitivity to $\lambda$ and $\gamma$.}
As shown in Figure~\ref{fig:tara_hyper_sensitivity}, TARA is reasonably robust to hyperparameter choices and performs best at $\lambda=0.5$ and $\gamma=0.5$. Performance degrades when either value becomes too small or too large, indicating that moderate future-credit propagation provides a better balance between long-horizon reward assignment and optimization stability.

\paragraph{Effect of environment scaling.}
Table~\ref{tab:env_scaling_ablation} shows that increasing the number of ToolVerse environments generally improves generalization. Scaling from 100 to the full 422 environments raises BFCL-v3 from 35.00\% to 37.50\% and $\tau^2$-Bench from 27.33\% to 32.37\%, highlighting the importance of environment diversity for robust agentic tool use.

\subsection{Training Dynamics}
\label{subsec:TARA_algorithm_analysis}


The training curves in Figure \ref{fig:train_curves} further substantiate the quantitative results. In the Trace Score plot, the Qwen3-8B model with TARA shows a clear and continuous improvement over training steps, with a sharper increase compared to the Naive GRPO approach. The Val Score plot similarly indicates stronger convergence for the TARA-enhanced model, reaching higher validation scores in fewer training steps. This observation suggests that TARA not only accelerates the learning process but also improves the stability of the learning trajectory, leading to faster convergence and better overall performance during the training process.

\section{Conclusion}
In this work, we presented \textbf{ToolVerse}, a framework that advances agentic reinforcement learning by scaling executable environments and synthesizing long-horizon tasks via the graph-based dynamic unlocking sampling algorithm. We have organized these tools and data into the GUST dataset. By integrating these diverse scenarios with a novel turn-aware credit assignment policy, our approach effectively addresses the challenges of sparse rewards and multi-step reasoning. We proved the correctness and effectiveness of the turn-aware advantage by comparing the training curves with the form of the formula derivation. Empirical results demonstrate that ToolVerse achieves advanced performance in complex tool-use benchmarks. In the future, we will continue to expand more advanced task generation methods in both simulated and real-world environments, and we will also continue to explore more scalable and fine-grained credit assignment strategies.

\section*{Limitations}
Despite the significant advancements enabled by ToolVerse in scaling agentic RL environments and improving long-horizon tool-integrated reasoning, our work has several limitations that warrant further investigation. 
First, although our dynamic task generation strategy leverages tool dependency graphs for coherent multi-step reasoning tasks, it relies on predefined dependencies within available protocols. This could limit the diversity of emergent behaviors compared to environments where novel tool interactions can be discovered autonomously. 
Second, our Turn-Aware Relative Advantage algorithm improves credit assignment granularity but still depends on reward signals defined at each turn; in highly sparse or ambiguous reward settings, the approach may struggle to provide sufficient feedback for optimal policy learning.



\bibliography{refer_papers}

\clearpage
\appendix

\section{Theoretical Analysis of Turn-Aware Relative Advantage}
\label{app:theory}

In this section, we provide a theoretical justification for the Turn-Aware Relative Advantage mechanism proposed in Section \ref{sec:reward}. We focus on two key properties: (1) variance reduction in the advantage estimation, and (2) the suppression of false positive credits (distractors) via the consistency gate.

\subsection{Preliminaries and Definitions}

Let $\tau = (s_1, a_1, \dots, s_T, a_T)$ denote a trajectory of length $T$. In the standard Group Relative Policy Optimization (GRPO), the advantage for an action $a_t$ is typically estimated using the cumulative return of the entire trajectory or the return-to-go. Let $G_t^{\text{std}}$ denote the standard return formulation used for advantage estimation at step $t$:
\begin{equation}
    G_t^{\text{std}} = r(s_t, a_t) + \sum_{k=1}^{T-t} \gamma^k r(s_{t+k}, a_{t+k})
\end{equation}
where the first term is the immediate reward and the summation represents the future return $R_{t+1:T}$.

In our proposed method, the advantage is derived from a fused value estimate $G_t^{\text{ours}}$. Based on the definitions of Local Advantage and Gated Future Advantage, the underlying value estimate can be expressed as:
\begin{equation}
    G_t^{\text{ours}} = r_{i,t} + \lambda \cdot V_{i,t} = r_{i,t} + \lambda \cdot \delta_{i,t} \cdot \sum_{k=1}^{T-t} \gamma^k r_{i, t+k}
\end{equation}
Here, $r_{i,t}$ is the deterministic binary reward based on the golden trace, $\lambda \in [0.5, 1.0]$ is the fusion weight, and $\delta_{i,t} \in \{0, 1\}$ is the consistency gate.

\subsection{Variance Reduction}

A primary challenge in long-horizon reasoning is the high variance of future returns, which introduces noise into the gradient estimation for earlier steps.

\begin{assumption}
    We assume that the immediate reward $r_{i,t}$ is deterministic given the state $s_t$ and action $a_i,t$ (i.e., strict matching with the golden trace). The future return $R_{t+1:T} = \sum_{k=1}^{T-t} \gamma^k r_{i, t+k}$ is a random variable with variance $\sigma^2_{\text{future}}$ due to the stochasticity of the policy and environment in subsequent steps.
\end{assumption}

\begin{theorem}[Variance Reduction]
    \label{thm:variance}
    For any valid reasoning step where $r_{i,t}=1$ (and thus $\delta_{i,t}=1$), the variance of the Turn-Aware value estimate is strictly lower than that of the standard return-to-go estimate, provided that the fusion hyperparameter satisfies $\lambda < 1$.
\end{theorem}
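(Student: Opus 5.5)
The argument is a direct second-moment computation, conditional on the event $r_{i,t}=1$, which also forces $\delta_{i,t}=1$ since $\delta_{i,t}=r_{i,t}$. On this event the two estimators from the preliminaries reduce to
\begin{equation*}
G_t^{\text{std}} = 1 + R_{t+1:T}, \qquad G_t^{\text{ours}} = 1 + \lambda R_{t+1:T},
\end{equation*}
with $R_{t+1:T}=\sum_{k=1}^{T-t}\gamma^k r_{i,t+k}$ the same random variable in both. This matters: the two estimators differ only by the scalar multiplying the future return, and the gate is inactive on this event.

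The steps are as follows.

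\begin{enumerate}
\item Apply the Assumption. Conditional on $(s_t,a_t)$, the immediate reward is a constant, so it contributes no variance. Hence $\mathrm{Var}(G_t^{\text{std}}\mid r_{i,t}=1)=\sigma^2_{\text{future}}$ and $\mathrm{Var}(G_t^{\text{ours}}\mid r_{i,t}=1)=\lambda^2\sigma^2_{\text{future}}$, using $\mathrm{Var}(c+aX)=a^2\mathrm{Var}(X)$.
\item Compare the two. Since $\lambda\in[0.5,1)$ by the main-text constraint $\lambda\in[0,1)$ combined with the appendix range, we have $0\le\lambda^2<1$. This gives $\lambda^2\sigma^2_{\text{future}}\le\sigma^2_{\text{future}}$, with strict inequality exactly when $\sigma^2_{\text{future}}>0$.
\item Remark on normalization. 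The advantages actually used are group-normalized versions of these quantities. Within a fixed turn, the group mean and standard deviation are shared constants across rollouts. So the comparison is to be read at the level of the raw value estimate, as the theorem states, rather than the normalized advantage.
\end{enumerate}

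The main obstacle is strictness. If $\sigma^2_{\text{future}}=0$, for example at the last turn $t=T$ where the sum is empty, or when the future is deterministic, both variances vanish and equality holds. I would therefore make explicit the implicit nondegeneracy condition $\sigma^2_{\text{future}}>0$, which the Assumption already suggests by calling $R_{t+1:T}$ a genuinely random variable. I would then state strictness under that condition and note that the case $\lambda=1$ recovers equality, which explains the hypothesis $\lambda<1$.

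A secondary subtlety is that conditioning on $r_{i,t}=1$ could in principle change the distribution of $R_{t+1:T}$. I would avoid this by defining $\sigma^2_{\text{future}}$ as the conditional variance on that event. Both estimators are then compared under the same conditional law, and the scaling argument goes through unchanged.
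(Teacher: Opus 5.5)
Your proposal is correct and follows the paper's proof essentially line for line. Both arguments use the deterministic immediate reward to get $\mathrm{Var}(G_t^{\text{std}})=\sigma^2_{\text{future}}$ and $\mathrm{Var}(G_t^{\text{ours}})=\lambda^2\sigma^2_{\text{future}}$ when $\delta_{i,t}=1$, and then conclude from $\lambda^2<1$. Your two additions are refinements the paper leaves implicit, not a different route: first, the nondegeneracy condition $\sigma^2_{\text{future}}>0$, which genuinely fails at $t=T$ (both variances vanish there, so the paper's ``strictly lower'' claim needs this condition); second, defining $\sigma^2_{\text{future}}$ conditionally on $r_{i,t}=1$.
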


\begin{proof}
    Consider the variance of the standard estimator $G_t^{\text{std}}$:
    \begin{align}
        \text{Var}(G_t^{\text{std}}) &= \text{Var}(r_{i,t} + R_{t+1:T}) \\
        &= \text{Var}(r_{i,t}) + \text{Var}(R_{t+1:T}) + 2\text{Cov}(r_{i,t}, R_{t+1:T})
    \end{align}
    Since $r_{i,t}$ is deterministic given the current step match, $\text{Var}(r_{i,t}) = 0$ and the covariance term vanishes. Thus:
    \begin{equation}
        \text{Var}(G_t^{\text{std}}) = \text{Var}(R_{t+1:T}) = \sigma^2_{\text{future}}
    \end{equation}
    
    Now consider our proposed estimator $G_t^{\text{ours}}$ given a valid current step ($\delta_{i,t}=1$):
    \begin{align}
        \text{Var}(G_t^{\text{ours}}) &= \text{Var}(r_{i,t} + \lambda \cdot 1 \cdot R_{t+1:T}) \\
        &= \lambda^2 \text{Var}(R_{t+1:T}) \\
        &= \lambda^2 \sigma^2_{\text{future}}
    \end{align}
    Since $\lambda < 1$, it follows that $\lambda^2 < 1$, and therefore:
    \begin{equation}
        \text{Var}(G_t^{\text{ours}}) < \text{Var}(G_t^{\text{std}})
    \end{equation}
    Furthermore, for an invalid step ($r_{i,t}=0$), the gate $\delta_{i,t}=0$ forces $V_{i,t}=0$, resulting in $\text{Var}(G_t^{\text{ours}}) = 0$. In both cases, our method significantly reduces the variance introduced by downstream uncertainty.
\end{proof}

\subsection{Consistency Gating and Distractor Suppression}

In tool-use environments, an agent may select an incorrect action (a ``distractor'') but still achieve a positive outcome later due to spurious correlations or environment tolerance. Standard RL reinforces such distractors.

\begin{definition}[Distractor Action]
    An action $a_{i,t}$ is defined as a distractor if it deviates from the golden trace (i.e., $r_{i,t} = 0$), yet the subsequent trajectory yields a positive future return $R_{t+1:T} > 0$.
\end{definition}

\begin{theorem}[Distractor Suppression]
    \label{thm:distractor}
    Under the Turn-Aware Relative Advantage mechanism, the total advantage $A_{i,t}^{\text{total}}$ for any distractor action is guaranteed to be non-positive, regardless of the magnitude of future returns.
\end{theorem}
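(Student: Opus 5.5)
The plan is to exploit two facts: the distractor's rewards are at the bottom of their range, and both normalisations use only the mean of nonnegative quantities. Fix a group of $K$ rollouts and a turn $t$, and let rollout $i$ take a distractor action. By the Definition, $r_{i,t}=0$. Since the consistency gate is $\delta_{i,t}=r_{i,t}$, we also get $\delta_{i,t}=0$, and therefore
\[
V_{i,t}=\delta_{i,t}\sum_{k=0}^{T-t}\gamma^k r_{i,t+k+1}=0 .
\]
This holds whatever the value of $R_{t+1:T}$, which is why the size of the future return cannot matter. The argument then has three steps: the local term, the future term, and their fusion.

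\textbf{Local term.} The rewards are binary, so $r_{j,t}\in\{0,1\}$ for every $j$. Hence $\mu_t^{\text{local}}=\frac1K\sum_j r_{j,t}\ge 0=r_{i,t}$. The numerator $r_{i,t}-\mu_t^{\text{local}}$ is therefore nonpositive. The denominator $\sigma_t^{\text{local}}+\epsilon$ is positive, so $A^{\text{local}}_{i,t}\le 0$. In the degenerate case where all rollouts share the same reward at turn $t$, the convention of Section~\ref{sec:reward} sets the advantage to zero, which is still $\le 0$.

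\textbf{Future term.} For every rollout $j$, $V_{j,t}\ge 0$, because $\delta_{j,t}\in\{0,1\}$, $\gamma\ge 0$, and $r_{j,t+k+1}\ge 0$. Hence $\mu_t^{\text{future}}\ge 0=V_{i,t}$, and the same reasoning gives $A^{\text{future}}_{i,t}\le 0$. The zero-variance convention again yields $0$ if every $V_{j,t}$ coincides.

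\textbf{Fusion.} Since $\lambda\in[0,1)$ is nonnegative,
\[
A^{\text{total}}_{i,t}=A^{\text{local}}_{i,t}+\lambda A^{\text{future}}_{i,t}\le 0 .
\]
This holds independently of $R_{t+1:T}$, which is exactly the claim.

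I expect the main obstacle to be making the bookkeeping airtight rather than any deep estimate. Two points need explicit justification. First, every group member's gated future value must be nonnegative, so that the group mean cannot fall below the distractor's value of zero; I would state the nonnegativity of the rewards and of $\gamma$ explicitly. Second, the zero-variance convention must be used so that the normalisation never divides by a vanishing quantity and produces a spurious sign. I would also remark that the bound is generally not strict: if every rollout errs at turn $t$, both terms vanish. This is consistent with the statement's ``non-positive''.
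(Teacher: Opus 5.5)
Your proof is correct and takes the same route as the paper's: the gate $\delta_{i,t}=r_{i,t}=0$ forces $V_{i,t}=0$, the local and future normalized terms are each shown to be nonpositive, and fusing with $\lambda\ge 0$ gives the bound. The one difference is that you get $\mu_t^{\text{local}}\ge 0$ and $\mu_t^{\text{future}}\ge 0$ from the nonnegativity of the binary rewards and gated values, which yields exactly the non-strict bound claimed; the paper instead assumes some rollout matches the golden trace ($\mu_t^{\text{local}}>0$) to get strict negativity, and handles the future mean only heuristically ($\mu_t^{\text{future}}\approx 0$).
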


\begin{proof}
    Let $a_{i,t}$ be a distractor action. By definition, $r_{i,t} = 0$.
    
    First, we analyze the \textbf{Local Advantage} $A_{i,t}^{\text{local}}$. The standardized advantage is calculated as:
    \begin{equation}
        A_{i,t}^{\text{local}} = \frac{r_{i,t} - \mu_t^{\text{local}}}{\sigma_t^{\text{local}} + \epsilon} = \frac{0 - \mu_t^{\text{local}}}{\sigma_t^{\text{local}} + \epsilon}
    \end{equation}
    Assuming the group size $K$ is sufficiently large and the task is solvable, at least one sampled trajectory (or the theoretical optimal) will match the trace, implying the group mean reward $\mu_t^{\text{local}} > 0$. Consequently, $A_{i,t}^{\text{local}} < 0$.
    
    Second, we analyze the \textbf{Future Advantage} $A_{i,t}^{\text{future}}$. The consistency gate is defined as $\delta_{i,t} = r_{i,t}$. Since $r_{i,t} = 0$, the gate closes:
    \begin{equation}
        V_{i,t} = \delta_{i,t} \cdot \sum_{k=0}^{\infty} \gamma^k r_{i, t+k+1} = 0 \cdot R_{t+1:T} = 0
    \end{equation}
    Even if the future return $R_{t+1:T}$ is high (lucky guess), it is blocked. The relative future advantage becomes:
    \begin{equation}
        A_{i,t}^{\text{future}} = \frac{0 - \mu_t^{\text{future}}}{\sigma_t^{\text{future}} + \epsilon}
    \end{equation}
    If the group generally performs well ($\mu_t^{\text{future}} > 0$), this term is negative. If the group performs poorly ($\mu_t^{\text{future}} \approx 0$), this term is zero.
    
    Finally, the \textbf{Total Advantage} is:
    \begin{equation}
        A_{i,t}^{\text{total}} = A_{i,t}^{\text{local}} + \lambda A_{i,t}^{\text{future}}
    \end{equation}
    Since $A_{i,t}^{\text{local}}$ is strictly negative and $\lambda A_{i,t}^{\text{future}}$ is non-positive, the total advantage $A_{i,t}^{\text{total}}$ is strictly negative. Thus, the probability of the distractor action $\pi(a_{i,t}|s_t)$ will be suppressed, eliminating the credit assignment error found in baseline methods.
\end{proof}

\section{Implementation Details}
\label{details_experiment}

\subsection{Environment Construction Details.}
\label{app:environment_construction}

We construct executable tool environments from raw tool schemas extracted from tool lists in system prompts of open-source tool-agentic data. In particular, we use the multi-turn subset of the Toucan dataset~\citep{xu2025toucansynthesizing15mtoolagentic}, where each example provides detailed tool specifications, including function names, natural-language descriptions, input arguments, and JSON schemas. Starting from these raw tool definitions, we first normalize the schema format by removing irrelevant textual artifacts, standardizing argument names and types, and converting each tool list into a consistent MCP-compatible specification.

For each toolset, we build a lightweight mock database implemented as Python dictionaries. The database construction is scenario-driven: given the JSON tool list, we first infer the underlying application scenario, possible business entities, and state variables required by the tools, such as users, orders, tickets, products, reservations, or inventory records. We then synthesize 3--5 coherent records for each scenario to support stateful interactions. These mock records are designed to be internally consistent across tools, so that outputs from query-style tools can serve as valid inputs to subsequent update or action-style tools.

Given the normalized schemas and the corresponding mock database, we generate executable MCP tool functions in Python. Each generated function either queries or updates the dictionary database according to the semantics of the original tool definition. We adopt an iterative generation-and-verification procedure: after code generation, we run syntax checks and unit tests to verify both executability and functional correctness. The unit tests cover whether each function can be called with valid arguments, whether it returns outputs conforming to the expected schema, and whether state-changing tools correctly update the mock database. If a toolset fails validation, we regenerate or repair the implementation and test it again. Toolsets that repeatedly fail to yield a reliable executable mock environment are discarded. In practice, approximately 20\% of the initial toolsets are filtered out, mostly because their APIs depend on external services, complex real-time states, or domain-specific side effects that are difficult to faithfully mock with a dictionary-based backend. The remaining toolsets form the executable MCP environments used for task synthesis and reinforcement learning.

\subsection{TDG Sanitization and Dependency Noise.}
\label{app:tdg_sanitization}

In our task synthesis pipeline, each node in the tool dependency graph (TDG) corresponds to a structured tool definition, including the tool name, natural-language description, input schema, and output schema. Directed edges are inferred by an LLM judge according to two types of dependencies: parameter dependency, where the output of one tool can provide a required argument for another tool, and semantic dependency, where one tool should logically precede another within the same application scenario. Although the LLM-based judge may introduce noisy or redundant edges, such noise does not directly lead to inconsistent supervision, because the task instruction, golden trace, and rule-based validator are all constructed with respect to the same executable dependency structure. In other words, the generated task and its ground-truth trajectory remain aligned under the sanitized TDG. Moreover, trajectories that cannot be instantiated or executed successfully are removed by the subsequent execution-based verification stage.

Before applying Dynamic Unlocking Sampling, we further sanitize each LLM-induced TDG to ensure that it is a directed acyclic graph. Starting from root nodes with zero in-degree, we perform a topological traversal and maintain only dependency edges that are reachable under the current partial order. If a cycle is detected, we iteratively remove the lowest-priority edge in the cycle, where priority is determined by the dependency type and the LLM confidence score when available. Parameter-dependency edges are preserved with higher priority than semantic-dependency edges, since they are directly grounded in tool schemas and executable arguments. If no confidence score is available, we remove semantic edges first and break ties by removing the edge whose removal maximizes the number of newly unlocked nodes. This procedure guarantees that the sanitized graph is acyclic before DUS is applied. The resulting DAG preserves the major dependency structure while enabling a valid topological unlocking process for long-horizon task construction.

\subsection{Inverse Context Reconstruction.}
\label{app:inverse_context_reconstruction}

DUS produces an abstract dependency-compatible tool skeleton, represented as a sequence of tool JSON objects without fully grounded runtime arguments. We convert this skeleton into an executable tool-call sequence through inverse context reconstruction. Given a sampled sequence, we instantiate each tool call in topological order under the current mock database state. Arguments are filled from two sources. First, for dependency-based arguments, we follow the incoming TDG edges and copy values from the outputs of previously executed tools. Second, for context-specific arguments that are not determined by previous outputs, we retrieve valid values from the mock database, such as user IDs, order IDs, product names, reservation records, or ticket identifiers. Whenever a tool changes the database state, the updated state is immediately synchronized and used when instantiating later tool calls. The resulting executable sequence is treated as the \textbf{Golden Trace}.

For each environment, we do not generate only one trace. Instead, we repeatedly sample dependency-compatible tool skeletons and instantiate them until the resulting trace collection covers the tools in the environment as broadly as possible. This coverage-oriented sampling prevents the generated data from collapsing onto a small subset of easy or frequently unlocked tools. The final collection of traces for an environment is then converted into a task set.

To generate natural-language user tasks, we prompt an LLM with each Golden Trace together with the tool definitions and the incoming dependency edges of the involved TDG nodes. The dependency edges are included explicitly so that the generated task reflects the intended causal structure of the trace. For example, when a later tool requires an identifier, status, or intermediate value produced by an earlier tool, the generated user instruction is encouraged to require this dependency rather than allowing independent tool calls. This produces tasks whose solutions naturally require multi-step tool use.

After task generation, we perform execution-based verification using LangGraph. The generated tasks are replayed in the executable MCP environment, and the corresponding Golden Trace is executed against the mock database. This step is necessary because the environment is stateful: earlier tool calls may update the database, which can change the valid arguments, return values, or preconditions of subsequent tools. A task set is retained only if its Golden Trace can be executed successfully under these state transitions. Finally, we apply teacher-agent Pass@8 filtering, where a strong teacher agent attempts each task multiple times. Tasks that cannot be solved by the teacher agent in any of the attempts are discarded, yielding the final verified task set used for RL training.

For turn-level normalization, we use $\epsilon=10^{-6}$. When the reward or future-value variance of a group at a given turn is zero, the corresponding normalized advantage is set to zero, since no relative preference signal is available within the group.

\subsection{Zero-Variance Handling in Turn-Level Normalization}
\label{app:zero_variance_normalization}

TARA computes turn-level relative advantages by normalizing each rollout's turn-level signal against the group distribution at the same turn. This normalization is applied to both the local reward signal and the gated future-value signal. In general, for a scalar turn-level signal $x_{i,t}$ of the $i$-th rollout at turn $t$, where $x_{i,t}$ can be either the local binary reward $r_{i,t}$ or the gated future value $V_{i,t}$, we compute:
\begin{equation}
\mu_t^x = \frac{1}{K}\sum_{i=1}^{K} x_{i,t},
\qquad
\sigma_t^x = \sqrt{\frac{1}{K}\sum_{i=1}^{K}(x_{i,t}-\mu_t^x)^2}.
\end{equation}
The corresponding normalized advantage is:
\begin{equation}
A_{i,t}^{x} =
\frac{x_{i,t}-\mu_t^x}{\sigma_t^x+\epsilon},
\end{equation}
where $\epsilon=10^{-6}$ is used for numerical stability.

A degenerate case occurs when all rollouts in the group receive the same value at turn $t$, i.e.,
\begin{equation}
x_{1,t}=x_{2,t}=\cdots=x_{K,t}.
\end{equation}
In this case, we have:
\begin{equation}
\mu_t^x = x_{i,t}, \qquad \sigma_t^x = 0,
\end{equation}
for every rollout $i$. Therefore, the numerator of the normalized advantage is also zero:
\begin{equation}
x_{i,t}-\mu_t^x = 0.
\end{equation}
The normalized advantage thus becomes:
\begin{equation}
A_{i,t}^{x} =
\frac{0}{0+\epsilon}=0.
\end{equation}

Intuitively, this case indicates that the group provides no relative preference signal at this turn. For example, all rollouts may have correctly covered the golden tool calls, or all rollouts may have failed in the same way. Since GRPO-style optimization relies on relative comparisons within a sampled group, such a turn should not push the policy toward or away from any particular rollout. We therefore set the corresponding normalized advantage to zero for both local and future components:
\begin{equation}
A_{i,t}^{\mathrm{local}}=0
\quad \text{if} \quad
\sigma_t^{\mathrm{local}}=0,
\end{equation}
and
\begin{equation}
A_{i,t}^{\mathrm{future}}=0
\quad \text{if} \quad
\sigma_t^{\mathrm{future}}=0.
\end{equation}
Consequently, the total turn-level advantage only receives contributions from components whose group-level variance is non-zero. This prevents degenerate turns from introducing artificial gradients while preserving valid relative learning signals from other turns.

\subsection{Experimental Details}
To ensure the reproducibility of our experiments, we provide the specific hyperparameter configurations used during the training phase. The model is optimized using a learning rate of $1 \times 10^{-6}$ with a mini-batch size of $32$. To manage sequence complexity, we set the maximum prompt length to $8,192$ tokens and the total response length to $24,000$ tokens, while limiting each individual turn's response to $1,024$ tokens. Data preprocessing includes shuffling and the filtering of overlong prompts to maintain training stability. Notably, the KL divergence loss is disabled in this setup ($kl\_loss=\text{False}$). For our proposed advantage mixing mechanism, we employ a mixing weight $\lambda_{mix} = 0.5$ for future advantages and a discount factor $\gamma = 0.5$ to balance immediate and long-term rewards.

\begin{table}[ht]
    \centering
    \caption{Summary of Experimental Hyperparameters}
    \label{tab:hyperparameters}
    \begin{small}
        \begin{tabular}{lc}
            \toprule
            \textbf{Hyperparameter} & \textbf{Value} \\
            \midrule
            Learning Rate & $1 \times 10^{-6}$ \\
            Mini-batch Size & $32$ \\
            Max Prompt Length & $8,192$ \\
            Max Response Length & $24,000$ \\
            Single-turn Response Length & $1,024$ \\
            KL Loss ($kl\_loss$) & False \\
            Mixing Weight ($\lambda_{mix}$) & $0.5$ \\
            Discount Factor ($\gamma$) & $0.5$ \\
            Data Shuffling & True \\
            Filter Overlong Prompts & True \\
            \bottomrule
        \end{tabular}
    \end{small}
\end{table}

\section{Related Experiments and Results}
\label{app:other}

\begin{table}[h]
    \caption{Statistics of the constructed GUST dataset. }
    \label{tab:dataset_stats}
    \centering
    \begin{small}
        \begin{tabular}{l|c}
        \toprule
        \textbf{Statistic} & \textbf{Count} \\
        \midrule
        Total Environments & 422 \\
        Total Executable Tools & 4438 \\
        Total Data Items & 2987 \\
        Avg. Tasks per data item & 3.8 \\
        Avg. Tools per environment & 12.2 \\
        \bottomrule
        \end{tabular}
    \end{small}
\end{table}

\begin{table}[h]
\centering
\small
\setlength{\tabcolsep}{4pt}
\begin{tabular}{lcc}
\toprule
\textbf{Training Environments} & \textbf{BFCL-v3} & \textbf{$\bm{\tau^2}$-Bench} \\
\midrule
Qwen3-8B & 28.88\% & 27.87\% \\
ToolVerse-100 & 35.00\% & 27.33\% \\
ToolVerse-200 & 35.75\% & 27.57\% \\
ToolVerse-300 & 36.00\% & 32.12\% \\
ToolVerse-Full(422) & \textbf{37.50\%} & \textbf{32.37\%} \\
\bottomrule
\end{tabular}
\caption{Effect of environment scaling. Table 7 evaluates how the number of ToolVerse training environments affects cross-benchmark generalization. Increasing the environment scale generally improves performance on external tool-use benchmarks. In particular, scaling from 100 to the full 422 environments improves BFCL-v3 from 35.00\% to 37.50\% and $\bm{\tau^2}$-Bench from 27.33\% to 32.37\%. These results suggest that environment diversity, rather than merely additional rollouts from a fixed environment set, is important for learning robust agentic tool-use behaviors.}
\label{tab:env_scaling_ablation}
\end{table}

\begin{figure}[h]
\centering
\includegraphics[width=\linewidth]{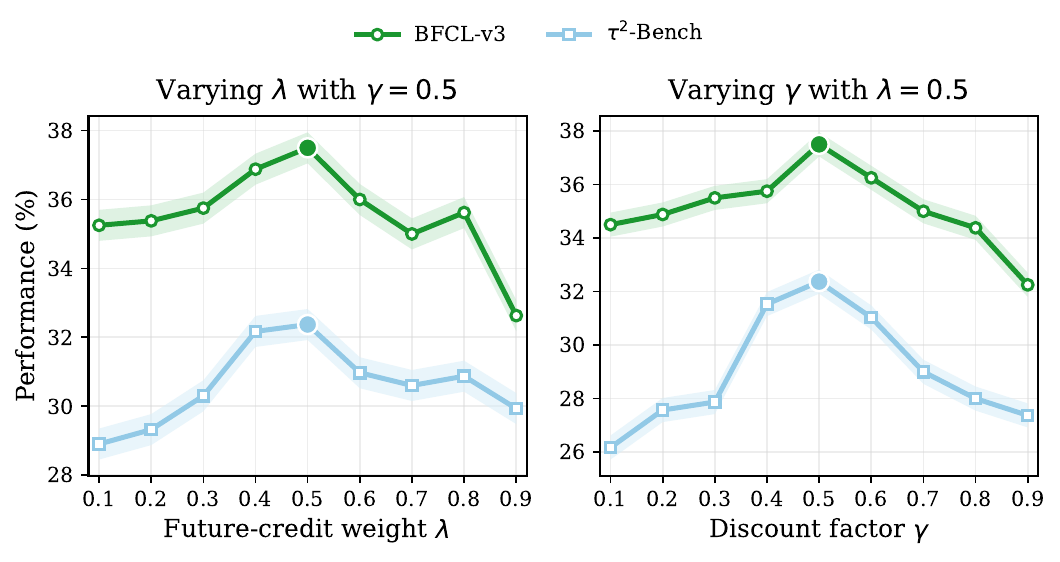}
\caption{Sensitivity analysis of TARA with $\lambda$ and $\gamma$.}
\label{fig:tara_hyper_sensitivity}
\end{figure}

\begin{figure}[h]
    \centering  
    \includegraphics[width=1\linewidth]{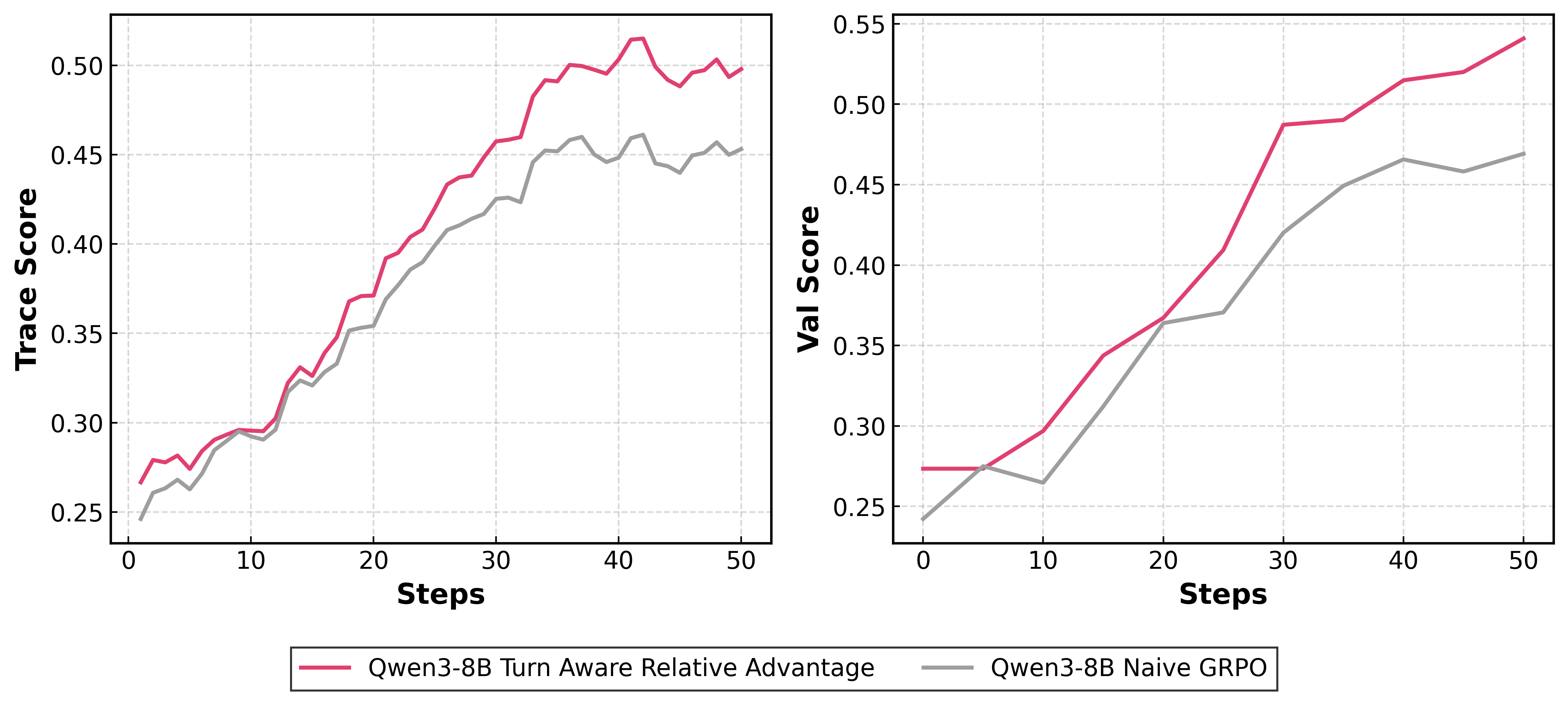}
    \caption{\textbf{Training Dynamics Comparison: TARA vs. Naive GRPO.} We report the Trace Score (left) and Validation Score (right) for Qwen3-8B throughout the training process. The TARA-enhanced model (pink) outperforms the Naive GRPO model (gray), with a more consistent and faster improvement in both scores across training steps. The Trace Score evaluates binary task completion, reflecting whether the agent successfully fulfills the final goal by executing every step in the trajectory correctly.}
    \label{fig:train_curves}
    \vskip -0.2in
\end{figure}

\subsection{Ablation Study}

We conduct ablation studies to analyze the contribution of the main components in TARA, the sensitivity to key hyperparameters, and the effect of scaling the number of training environments. Unless otherwise specified, all ablations are performed with Qwen3-8B under the same training recipe as the main experiments. \paragraph{Component ablation.} Table~\ref{tab:tara_component_ablation} studies the contribution of each component in TARA. The turn-local variant uses only the turn-level advantage, corresponding to $\lambda=0$ and thus excluding future credit. This variant improves over the base model on BFCL-v3, but it is weaker than GRPO and does not improve the average performance on $\tau^2$-Bench. This suggests that purely local credit assignment is insufficient for long-horizon tool-use tasks. Adding future credit without the consistency gate improves BFCL-v3 to 35.00\%, but still underperforms full TARA, especially on $\tau^2$-Bench. Full TARA achieves the best results on both benchmarks, reaching 37.50\% on BFCL-v3 and 32.37\% on $\tau^2$-Bench. These results indicate that both future-aware credit propagation and the consistency gate are important for stable and effective optimization. \paragraph{Sensitivity to $\lambda$ and $\gamma$.} We further examine the sensitivity of TARA to the future-credit weight $\lambda$ and the discount factor $\gamma$. As shown in Figure~\ref{fig:tara_hyper_sensitivity}, TARA is not dependent on a single narrowly tuned configuration. When fixing $\gamma=0.5$, performance generally improves as $\lambda$ increases from 0.1 to 0.5, with the best result obtained at $\lambda=0.5$. However, assigning too much weight to future credit leads to degraded performance, indicating that overemphasizing delayed rewards may introduce noisy or overly diffuse credit signals. A similar pattern is observed for $\gamma$: with $\lambda=0.5$, the best performance is achieved at $\gamma=0.5$, while both smaller and larger values reduce performance. These results suggest that a moderate degree of future credit propagation provides the best trade-off between long-horizon reward assignment and optimization stability. \paragraph{Effect of environment scaling.} Finally, we study how the number of training environments affects generalization. Table~\ref{tab:env_scaling_ablation} reports results when varying the number of ToolVerse environments from 100 to 200, 300, and the full 422-environment setting. Increasing the number of environments consistently improves BFCL-v3, from 35.00\% with 100 environments to 37.50\% with the full set. The gains on $\bm{\tau^2}$-Bench are more pronounced when scaling from 200 to 300 environments, where the average score increases from 27.57\% to 32.12\%. The full environment set further improves performance to 32.37\%. These results support the importance of environment diversity for learning robust agentic tool-use behavior.

\clearpage
\section{Algorithm}

\begin{algorithm}[h]
\caption{Dynamic Unlocking Sampling}
\label{alg:dynamic_unlocking}
\begin{algorithmic}[1]
    \Require Tool dependency graph $\mathcal{G} = (\mathcal{V}, \mathcal{E})$, batch size $N$
    \Ensure Multi-turn trajectory $\mathcal{T}$

    \State $D[v] \gets \textsc{InDegree}(v)$ for all $v \in \mathcal{V}$
    \State $\mathcal{Q} \gets \{v \in \mathcal{V} \mid D[v] = 0\}$
    \State $\mathcal{T} \gets \emptyset$

    \While{$\mathcal{Q} \neq \emptyset$}
        \State $k \gets \min(|\mathcal{Q}|, N)$
        \State $\mathcal{S}_t \gets \textsc{Sample}(\mathcal{Q}, k)$
        \State $\mathcal{Q} \gets \mathcal{Q} \setminus \mathcal{S}_t$
        \State $\mathcal{T} \gets \mathcal{T} \cup \{\mathcal{S}_t\}$

        \ForAll{$u \in \mathcal{S}_t$}
            \ForAll{$v \in \textsc{Successors}(u, \mathcal{G})$}
                \State $D[v] \gets D[v] - 1$
                \If{$D[v] = 0$}
                    \State $\mathcal{Q} \gets \mathcal{Q} \cup \{v\}$
                \EndIf
            \EndFor
        \EndFor
    \EndWhile

    \State \Return $\mathcal{T}$
\end{algorithmic}
\end{algorithm}


\section{Case Study}
As show in Figure \ref{fig:case_study} , in our designed long-horizon tool use task, the agent faces the challenge of temporal dependency resolution. The model must not only plan the trajectory of tool invocations but also maintain a coherent internal state across multiple turns. As illustrated, the agent successfully identifies missing information (e.g., \texttt{budget constraints}), solicits it from the user, and synthesizes this new constraint with previously retrieved data (e.g., the specific date and location from the \texttt{Search\_Event} output) to execute the final \texttt{Book\_Hotel} action accurately.

\begin{figure}[h]
    \centering  
    \includegraphics[width=1\linewidth]{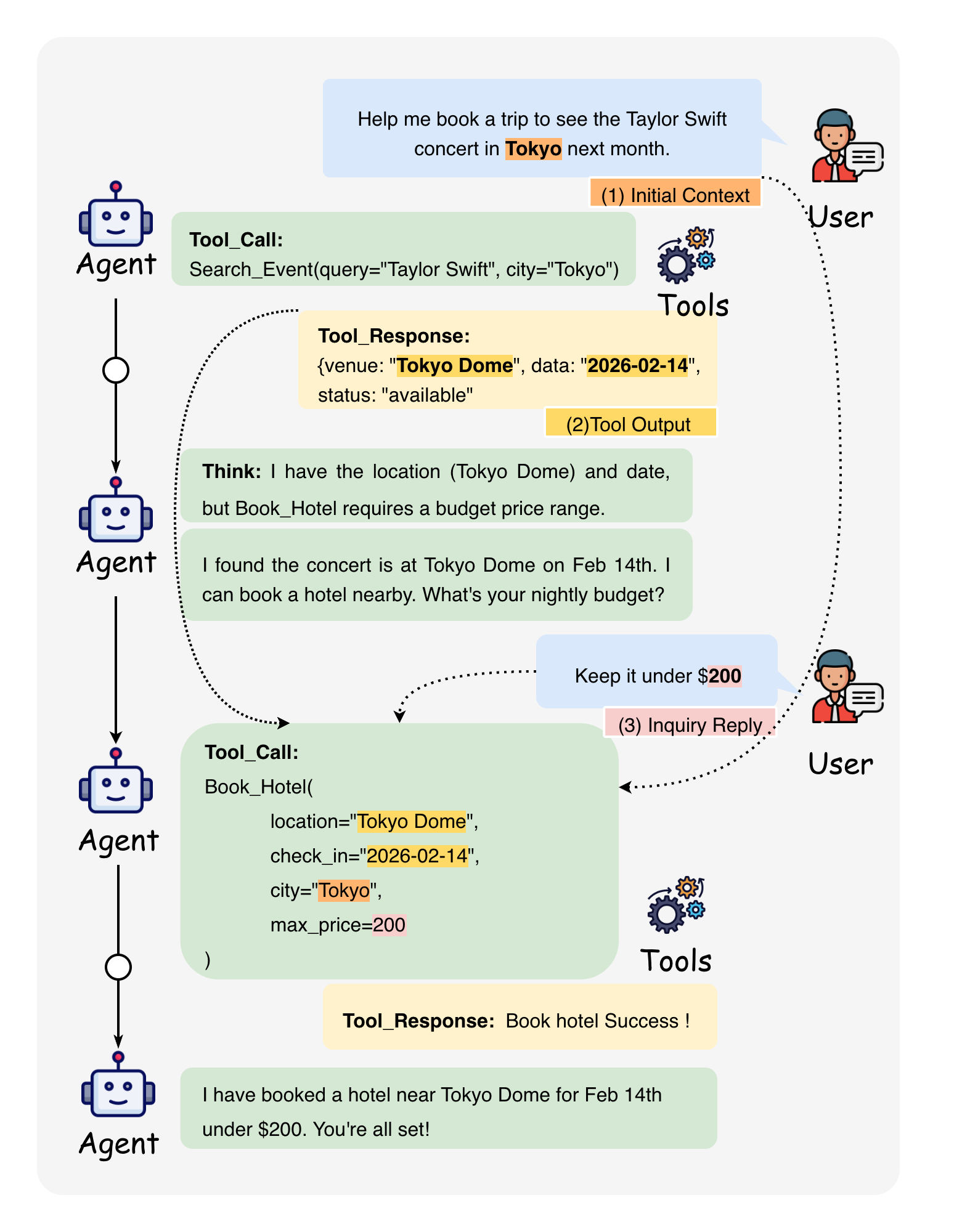}
    \caption{Long-horizon Multi-turn Tool use case Study. The agent demonstrates multi-turn contextual reasoning by inferring user intent, formulating a sequential execution plan, and dynamically resolving tool parameters (e.g., venue location and date) from prior dialogue history to complete the booking task.}
    \label{fig:case_study}
    \vskip -0.3in
\end{figure}

\newpage
\section{Prompts}

\subsection{Prompt Template for Task Generation}
\label{app:prompt_template}

Prompt: Appendix demonstrates the prompt template used to synthesize the user query and multi-turn dialogue based on the generated tool trajectory.

\begin{promptbox}{Prompt: Long-horizon Task Synthesis Prompt Template}
    \textbf{Role:} You are an expert AI dataset synthesizer specializing in constructing long-horizon reasoning tasks.\\
    
    \textbf{Input Data:}
    \begin{itemize}
        \item \textbf{Tool Dependency Graph:} A logical graph defining the causal links between tools.
        \item \textbf{Golden Trace:} A sequence of executed tool calls: $\mathcal{T} = [t_1, t_2, \dots, t_n]$, including input arguments and return values.
    \end{itemize}
    
    \textbf{Task:}
    Based on the provided \textit{Golden Trace} and dependency logic, generate a natural language \textbf{User Query} and a \textbf{Multi-turn Dialogue History} that would naturally lead to this specific sequence of tool executions.\\
    
    \textbf{Requirements:}
    \begin{enumerate}
        \item \textbf{Implicit Dependencies:} The user query should not explicitly list every step. Instead, it should state a high-level goal that implicitly requires the agent to figure out the dependencies (e.g., "Book me a flight" implies checking availability first).
        \item \textbf{Information Gap:} Ensure the dialogue reflects a realistic interaction where the agent might need to ask for clarification if parameters were missing in the early stages, matching the structure of the Golden Trace.
        \item \textbf{Consistency:} The generated user intent must strictly align with the final output of the last tool in the trace.
    \end{enumerate}
    
    \textbf{Input Trace:} \\
    \texttt{\{\{golden\_trace\_json\}\}} \\
    
    \textbf{Output Format:} \\
    Return the result in JSON format containing "user\_query", "dialogue\_history", and "reasoning\_chain".
\end{promptbox}

\begin{promptbox}{Prompt: Reproduce Tool Defination Prompt Template}

    You are now a JSON Tool Refactoring Assistant. Your core responsibility is to refactor tool definitions based on a given seed, following specified steps to output tool descriptions in JSON format that comply with the OpenAI Schema specification.

    \textbf{Core Analysis Steps (Step 1):}
    \begin{itemize}[leftmargin=*, noitemsep, topsep=2pt]
        \item Comprehensive Deconstruction: Understand all input tool information, including tool names, descriptions, parameters (name, type, description), and required fields.
        \item Business Scenario Analysis: Clarify the core business problems these tools solve and the applicable business domains.
        \item Compatibility Assessment: Judge whether the current tool definitions are suitable for implementation via Python functions with a dictionary database. Analyze issues in naming, parameter design, and functional descriptions (e.g., redundancy, parameter-function mismatch, or inability to implement independently).
    \end{itemize}
    
    \textbf{Refactoring Requirements (Step 2):}
    Refactor the names, descriptions, and parameters based on Step 1:
    \begin{enumerate}[leftmargin=*, noitemsep, topsep=2pt]
        \item Functional Fitness: Each refactored tool must be independently implementable by a single Python function with clear logic and no redundant cross-function dependencies.
        \item OpenAI Schema Refinement: 
        \begin{itemize}[leftmargin=*, noitemsep]
            \item Name: Concise and precise, reflecting the core function (lowercase + underscores).
            \item Description: Clear and complete, explaining the purpose and scope within the business scenario.
            \item Parameters: Optimize names and descriptions to ensure clarity; limit to a maximum of 4 parameters using standard Python types (int, float, str, list/array, dict).
            \item Call Correlation: Build business logic-driven constraints for the toolset. Define dependencies where the input of one tool correlates to the output of another, supporting multi-turn reasoning.
            \item Response Field: Add a mandatory "response" field containing at least "status" (success/error), "message", and other function-related output keys.
        \end{itemize}
    \end{enumerate}

    \textbf{Output Requirements:}
    \begin{itemize}[leftmargin=*, noitemsep, topsep=2pt]
        \item Scenario Description: Describe the analyzed result in one paragraph wrapped in \texttt{<description></description>} tags.
        \item Tool Output: A strict JSON array of objects with "type": "function" and "function" keys (containing name, description, parameters, and the custom response field).
        \item Language: The entire output must be in English.
    \end{itemize}

    \textbf{Input Tool Definition:} \\
    \texttt{\{\{tools\_defination\}\}}

\end{promptbox}

\end{document}